\newenvironment{proof}{\paragraph{Proof:}}{\hfill$\square$}
\newtheorem{definition}{Definition}[section]
\newtheorem{proposition}{Proposition}[section]
\icmltitlerunning{Towards Fair Deep Clustering With Multi-State Protected Variables}
\begin{document}

\twocolumn[
\icmltitle{Towards Fair Deep Clustering With Multi-State Protected Variables}




\begin{icmlauthorlist}
\icmlauthor{Bokun Wang}{to}
\icmlauthor{Ian Davidson}{to}
\end{icmlauthorlist}

\icmlaffiliation{to}{Department of Computer Science, UC Davis}

\icmlcorrespondingauthor{Bokun Wang}{bbwang@ucdavis.edu}

\icmlkeywords{Machine Learning, ICML}

\vskip 0.3in
]



\printAffiliationsAndNotice{}  

	\begin{abstract}
	Fair clustering under the disparate impact doctrine requires that population of each protected group should be approximately equal 
	in every cluster. Previous work investigated a difficult-to-scale pre-processing step for $k$-center and $k$-median style algorithms for the special case of this problem when the number of protected groups is two. In this work, we consider a more general and practical setting where there can be many protected groups. To this end, we propose Deep Fair Clustering, which learns a discriminative but fair cluster assignment function. The experimental results on three public datasets with different types of protected attribute show that our approach can steadily improve the degree of fairness while only having minor loss in terms of clustering quality.
\end{abstract}

\section{Introduction}

\textbf{Problem Setting.} The purpose of clustering is often given as trying to find groups of instances that are similar to each other but dis-similar to others. Common clustering methods include the spectral clustering \cite{DBLP:conf/nips/NgJW01, DBLP:journals/sac/Luxburg07} that tries to effectively maximize the edges within a sub-graph~(clusters), the Louvain method \cite{blondel2008fast} that tries to maximize the number of edges per cluster, and k-means style algorithms \cite{DBLP:journals/tit/Lloyd82, DBLP:conf/focs/OstrovskyRSS06} that attempt to minimize the distortion.	

An alternative more pragmatic view of clustering is to find equivalence classes where every object in the cluster is effectively equivalent to every other object. This is often how clustering is used in practice: customers in the same cluster are marketed/retained the same way, people in the same community are candidates for the same friendship invitations and images in the same group are given similar tags. In most clustering settings the view of the means justifying the ends is sufficient. That is, any clustering is justifiable so long as it better improves the end result of maximizing click through rate, maximizing number of friendships and so on.

\textbf{Limits of Existing Work.} However, when the clusters are of people or other entities deserving moral status \cite{warren1997moral} then the issue
of fairness of the clustering must be considered. The standard way of ensuring fairness is to
declare some variables as \textit{protected} and these variables are \textit{not} used by the algorithm when forming the clusters but rather are given to the algorithm to ensure the clustering is fair. To our knowledge there is only one theoretical paper on the topic \cite{chierichetti2017fair} which shows a clever way of pre-processing data set into chunks. This step guarantees that when $k$-center and $k$-median algorithms are applied to the chunks they produce fair clusters. However, this pre-processing step is time consuming and experimental results are limited to one thousand or less sized instances. 
Furthermore, existing work even in the supervised learning explores using only \textbf{Boolean protected variables} such as gender, ethnicity and attempted to balance them so that each cluster/class has	equal (within mathematical limits) of males and females, non-whites and whites etc. The area of  multi-state protected variables for clustering has not been studied to our knowledge but many protected status variables are in fact not binary. Consider the classic eight tenants of protected status: sex, race, age, disability, color, creed, national origin, or religion.  age and race. None are \emph{now} considered binary with each having more than two responses in most forms used to collect data. 

\textbf{Challenges and Notion of Fairoids.} The area of fair clustering raises several challenges. 
A core challenge is that optimizing some measure of good clustering and a good measure of fairness is a challenging concept as the two are fundamentally different. Then the key challenge is to find a formulation that can optimize both objectives simultaneously. In this work, we explore the potential of incorporating fairness consideration in deep clustering models.  The core idea of our method is to use deep learning to learn an embedding that achieves two aims: i) separating the instances into clusters and ii) making the composition of those clusters uniform with respect to the protected attribute. This is achieved by exploiting the idea that the instances with the \textbf{same protected status} form a protected group and inherently a ``fairoid" (fairness centroid). That is if the protected status is race, there would be a fairoid for each of white, black, hispanic etc. Then we wish the cluster centroids to be \textbf{equi-distant} from each and every fairoid to ensure fairness. This is illustrated in Figure \ref{fig:idea} and for an experiment in Figure \ref{fig: tsne_ours}. There are just two protected status types in this simple example. The deep learner simultaneously learns an embedding for both the cluster centroids and the fairoids  under the requirement that each cluster should be equi-distant to each and every fairoid in the embedded space. In Section \ref{moti} we discuss why this is reasonable for ensuring fairness.

\begin{figure}[t]
	\centering
	\includegraphics[width=0.88\linewidth]{./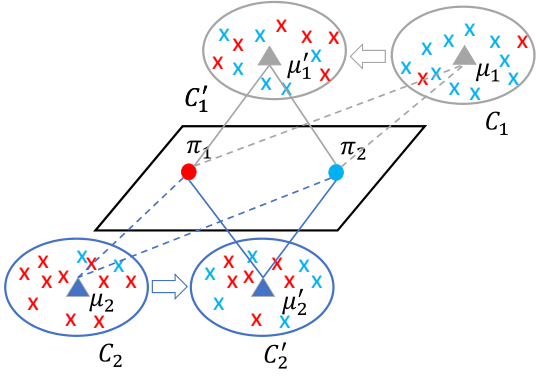}
	\caption{Core idea of our approach. The red and blue circles $\pi_1, \pi_2$ are fairoids of the two protected groups, where colors are coded to match the instances denoted by``$\times$''. $\mu_1, \mu_2$ are cluster centroids before training fairness objective while $\mu_1^{'}, \mu_2^{'}$ are the cluster centroids after optimizing the fairness requirement that the centroids are equi-distant from the fairoids.}
	\label{fig:idea}
\end{figure}

\begin{figure}[t]
	\centering
	\includegraphics[width=0.88\linewidth]{./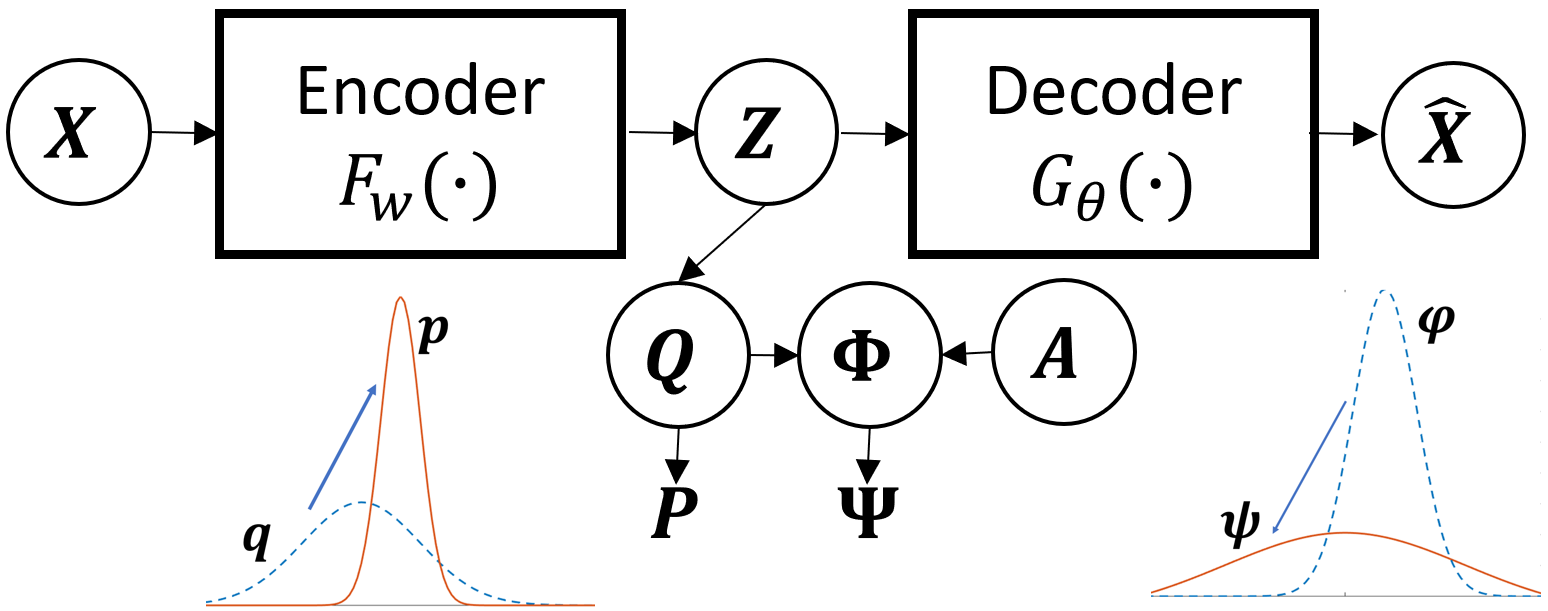}
	\caption{Proposed model for fair deep clustering. $X$ is the original data and encoder $F_{\mathcal{W}}$ maps $X$ into latent representations $Z$. Decoder $G_\theta$ reconstructs $\hat{X}$ from $Z$. $A$ denotes given protected attribute. $Q$ is the soft cluster assignment matrix while $\Phi$ is the ``fairness'' assignment matrix (explained in Section \ref{our_model}). $P$ and $\Psi$ are self-training targets, for sharpening and smoothing purposes.}
	\label{fig:architecture}
\end{figure}

The contributions of this paper are summarized as follows:
\begin{itemize}
	\setlength{\itemsep}{2pt}
	\setlength{\parsep}{2pt}
	\setlength{\parskip}{2pt}
	\item We define fairness problem in clustering for multi-state protected attributes.This has significant practical importance as most
	protected variables are inherently multi-state not binary.
	\item We propose a novel mechanism towards finding fair clusters. All instances for a particular protected variable value defines a fairoid and finding centroids equi-distant to the fairoids produces fair clustering.
	\item We try out method in a deep learning setting and demonstrate its effectiveness against a number of classic baselines.
\end{itemize}

\section{Related Work} 
\subsection{Deep Clustering}

Recent deep clustering work could be broadly categorized into \textit{two-phase} and \textit{one-phase} approaches. The two-phase approaches \cite{DBLP:conf/icml/LawUZ17, DBLP:journals/corr/abs-1801-01587} learn a parameterized mapping function $F_\mathcal{W}: \mathbb{R}^D \rightarrow \mathbb{R}^d$  ($D \gg d$) from original data or high-dimensional feature space to a latent space that is good for classical clustering methods like $k$-means and then apply $k$-means on that space. Alternatively, one-phase approaches \cite{DBLP:conf/icml/XieGF16, DBLP:conf/cvpr/YangPB16} directly learn an inductive assignment function to allocate data points into $K$ clusters. A representative one-phase work is deep embedded clustering \cite{DBLP:conf/icml/XieGF16}, which trains the network parameters based on self-training strategy. This paradigm has been adopted by others \cite{DBLP:conf/icml/YangFSH17, dizaji2017deep}. In this work, we focus on the fairness issue in one-stage deep clustering, which to our knowledge has not been investigated before.

\subsection{Fairness in Machine Learning} 

Recently, inherent unfairness\footnote{http://fairml.how/tutorial/index.html/} and algorithmic bias\footnote{http://approximatelycorrect.com/2016/11/07/the-foundations-of-algorithmic-bias/} in machine learning algorithms have attracted much research interest. There are two main doctrines regarding fairness in machine learning: \textit{Group} Fairness and \textit{Individual} Fairness. The group notion, also called statistical parity, requires parity of some specific statistical measure across all protected demographic groups. Considering a automatic college admission system, the admission \emph{probabilities} of Caucasian, Black, Asian, etc. students should be the same. On the other hand, the individual notion requires pair of individuals with similar protected information should also be treated similarly with respect to a task-specific metric. For example, people with similar heath characteristics should receive similar treatments \cite{DBLP:conf/innovations/DworkHPRZ12}. 
The two main notions of fairness from social science and the law are \textit{disparate impact} and \textit{disparate treatment}. Disparate treatment means a decision on individual changes when the protected attribute is changed. For example, a job applicants is given a job offer but it later retracted when their gender is revealed. Disparate impact denotes that the decision outcome disproportionately benefits or hurts users belonging to different protected groups. Fairness in this work always refers to \emph{group fairness and disparate impact} notions as others have \cite{chierichetti2017fair}, in particular the parity we require is given by Equation \ref{fair_def_T}. 


Our work is related to supervised fair representation learning \cite{DBLP:conf/icml/MadrasCPZ18, DBLP:conf/icml/ZemelWSPD13}, which learns fairer latent representation but only for classification or transfer learning tasks hence is not directly comparable. Our work instead directly imposes fairness requirements on cluster assignment function and centroids. The work most related to ours is \cite{chierichetti2017fair}. They study the fair clustering problem when protected attribute is binary (\textit{e.g.}, male/female). However, it is worth noting that a lot of protected attributes could not be represented as binary values, like age, income, etc. Besides, there might be multiple binary protected attributes required to be balanced simultaneously (\textit{e.g.}, in each cluster $\#\mathrm{Male} = \#\mathrm{Female}$ and $ \#\mathrm{Single} = \#\mathrm{Married}$). In fact, both single and multiple \textit{binary} protected variables are just the special cases of single multi-state protected variable.

Our paper tries to attempt settings not addressed in earlier work \cite{chierichetti2017fair} namely: i) multiple protected variables and ii) scalable fair clustering. We attempt the  fairness in clustering problem for multi-state protected attribute which has $T \geq 2$ unique values.  Moreover, the approach in \cite{chierichetti2017fair} is not scalable as it utilizes $\mathcal{O}(N^3)$ combinatorial algorithm so that it cannot been appied if $N$ is large. To this end, we propose a scalable algorithm via mini-batch training. Our approach jointly optimize clustering objective from DEC \cite{DBLP:conf/icml/XieGF16} and our novel fairness objective. Potentially, Our fairness objective could also be combined with better topology-preserving clustering objectives like those in \cite{DBLP:journals/corr/abs-1801-01587, DBLP:journals/corr/abs-1803-01449} but clustering accuracy is not our major concern.

\section{Definition and Preliminary}
Let $X \in \mathbb{R}^{N \times D}$ denote $N$ data points with $D$-dimension features. The aim of clustering $\mathcal{C}$ is to learn an assignment function $\alpha: \mathbf{x} \rightarrow \{1, ..., K\}$, which allocates $N$ data points into $K$ disjoint clusters $C^1, C^2, ..., C^K$ ( $\bigcap_{k=1}^{K}C^k = \emptyset$). In addition let the protected attribute annotations $A\in \mathbb{R}^{N}$ be available for fairness consideration. If there are $T$ unique values in $A$, the data points $X$ can be naturally partitioned into $T$ disjoint protected groups $\mathcal{G}^1, \mathcal{G}^2, ..., \mathcal{G}^T$ ($\bigcap_{t=1}^{T}\mathcal{G}^t = \emptyset$) by another function $\chi: \mathbf{x} \rightarrow \mathcal{A}$. Here $\mathcal{A}$ is a finite set of unique values in protected attribute, $\mathcal{A} = \{a_1, a_2, ..., a_T\}$. When considering fairness in the clustering problem, the aim is to get a fair partition \textit{w.r.t.} specific protected attribute $A$ but a useful clustering \textit{w.r.t.} topology of original data.

\textbf{Previous Work With Binary Protected Variables.} Previous work \cite{chierichetti2017fair} discussed the definition of fair clustering under the special case $T = 2$. The ``perfectly balanced'' cluster $C^k$ can be defined as follows, which is colloquially that all protected states are equally likely. 
{\setlength\abovedisplayskip{5pt}
	\begin{equation}
	\label{fair_def_2}
	P(\mathbf{x} \in \mathcal{G}^1 \big| \mathbf{x} \in C^k) = P(\mathbf{x} \in \mathcal{G}^{2} \big| \mathbf{x} \in C^k)
	\end{equation}
	\setlength\belowdisplayskip{5pt}}
They then define the fairness measure called ``balance'' inspired by $p\%$-rule \cite{Biddle}. Here $P(x \in \mathcal{G}^t \big| \mathbf{x} \in C^K) = \frac{N_{k}^t}{|C^k|}, t=1, 2$ and $N_{k}^t$ is the cardinality of the set $\{i|\mathbf{x}_i \in C^k \land \mathbf{x}_i \in \mathcal{G}^t\}$. 	
\begin{equation}
\label{p-rule}
\mathrm{balance}(\mathcal{C}^k) = \min\Big(\frac{N_k^1}{N_k^2}, \frac{N_k^2}{N_k^1}\Big) \in [0, 1]
\end{equation} 

\textbf{Generalization to Multi-State Protected Variables.} 	If we  consider a more general setting, that is the protected attribute is not necessarily binary, we can generalize the definition in Formula \ref{fair_def_2} into the following: 

\begin{definition}
	\label{def_T}
	\textbf{Condition of Fairest cluster}. According to the notions of group fairness and disparate impact the $k^{th}$ cluster $C^k$ is fair if the chance of encountering any protected status is equal. Formally, $\forall t_1, t_2 \in 1, 2, ..., T$:
\end{definition}
\begin{equation}
\label{fair_def_T}
P(\mathbf{x}\!\in\!\mathcal{G}^{t_1}\big|\mathbf{x}\!\in\!C^k)\!=\! P(\mathbf{x}\!\in\!\mathcal{G}^{t_2}\big|\mathbf{x} \in C^k)
\end{equation}

\begin{proposition}
	\label{theorem_1}
	Definition \ref{def_T} is equivalent to the $k^{th}$ cluster's  histogram of the protected attribute $h^k$ having a uniform distribution. That is  $h^k \sim \mathcal{U}\big(\inf{\mathcal{A}}, \sup{\mathcal{A}}\big)$.
\end{proposition}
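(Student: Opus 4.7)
The plan is to unfold both sides of the claimed equivalence into the same elementary statement about the empirical mass function on $\mathcal{A}$, and then observe that they coincide definitionally.

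First I would make the histogram $h^k$ precise. Since $\mathcal{G}^1,\dots,\mathcal{G}^T$ partition the dataset, each instance in $C^k$ has exactly one protected value $a_t\in\mathcal{A}$, and the normalized histogram on $\mathcal{A}$ is $h^k(a_t)=N_k^t/|C^k|$. By the definition of conditional probability this is exactly $P(\mathbf{x}\in\mathcal{G}^t\mid \mathbf{x}\in C^k)$. So the map $t\mapsto h^k(a_t)$ and the map $t\mapsto P(\mathbf{x}\in\mathcal{G}^t\mid \mathbf{x}\in C^k)$ are literally the same function on $\{1,\dots,T\}$.

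Next I would chain two short implications. For the forward direction, assume Definition \ref{def_T} holds: all $T$ conditional probabilities are equal to some common value $p$. Because $\{\mathcal{G}^t\}$ partitions the sample space, $\sum_{t=1}^T P(\mathbf{x}\in\mathcal{G}^t\mid\mathbf{x}\in C^k)=1$, forcing $p=1/T$. Hence $h^k(a_t)=1/T$ for every $t$, which is the discrete uniform distribution on $\mathcal{A}$. For the converse, if $h^k$ is uniform on $\mathcal{A}$ then $h^k(a_{t_1})=h^k(a_{t_2})$ for all $t_1,t_2$, and translating back through the identification above recovers Equation \ref{fair_def_T}.

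The only mildly delicate point, and the place I would be most careful, is the notation $\mathcal{U}(\inf\mathcal{A},\sup\mathcal{A})$ in the statement: $\mathcal{A}$ is a finite set of $T$ categorical (possibly non-numeric) values, so the natural reading is the \emph{discrete} uniform on $\mathcal{A}$ rather than a continuous one on the interval $[\inf\mathcal{A},\sup\mathcal{A}]$. I would state this convention explicitly at the start of the proof so the equivalence is between Definition \ref{def_T} and $h^k(a_t)=1/T$ for all $t$; with that convention in place, the argument collapses to the two-line unpacking above and no real computation is needed.
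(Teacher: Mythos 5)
Your proof is correct and follows essentially the same route as the paper's: identify each histogram bin $h^k_t$ with the conditional probability $P(\mathbf{x}\in\mathcal{G}^t\mid\mathbf{x}\in C^k)$ and use the fact that these values sum to one to force the common value $1/T$. You are in fact slightly more careful than the paper, which writes out only the forward implication and leaves both the (trivial) converse and the reading of $\mathcal{U}(\inf\mathcal{A},\sup\mathcal{A})$ as a discrete uniform on $\mathcal{A}$ implicit.
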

\begin{proof}
	Value $h^k_t$ of the histogram $h^k$ on the bin $a_t$ for $k^{th}$ cluster is $h^k_t = \frac{|\mathcal{I}^k_t|}{|C^k|} = \frac{\sum_i P(\mathbf{x}_i \in C^k \land \mathbf{x}_i \in \mathcal{G}^t)}{\sum_i P(\mathbf{x}_i \in C^k)}$, where $\mathcal{I}^k_t = \{i\big|\chi(\mathbf{x}_i) = a_t \land \alpha(\mathbf{x}_i) = k\}$. If $P(\mathbf{x}_i \in \mathcal{G}^{t_1} \big| \mathbf{x}_i \in C^k) = P(\mathbf{x}_i\!\in \mathcal{G}^{t_2} \big| \mathbf{x}_i \in C^k)$ and $p(\mathbf{x}_i\!\in\! \mathcal{G}^t), p(\mathbf{x}_i \in C^k)$ are determined from clustering $\mathcal{C}$ and protected attribute annotation $A$, then $h^{k}_{t_1} = h^{k}_{t_2}, \forall t_1, t_2 \in 1, ..., T$. This leads to $h^{k}_{t} = \frac{1}{T}$, concluding the proof.
	\label{fair_def_T_eq}
\end{proof}

In the $k^{th}$ cluster ${C}^k$, there exists $T$ subgroups $G_k^1 \subset \mathcal{G}^1, G_k^2\subset \mathcal{G}^2, ..., G_k^T\subset \mathcal{G}^T$ ($\forall t,~|G_k^t| \geq 0$) according to protected attribute annotations. One challenging problem is how to measure disparity among those subgroups $G_k^1, G_k^2, ..., G_k^T$. If $T = 2$, we can simply compare the ratios $|G_k^1|/|C^k|$ and $|G_k^2|/|C^k|$ through either $p\%$-rule or Calders-Verwer score \cite{DBLP:journals/datamine/CaldersV10}. The former way is utilized in \cite{chierichetti2017fair} to define balance score in Formula \ref{p-rule} while the latter way could be written as $\Big|P(\mathbf{x} \in \mathcal{G}^1\big|\mathbf{x} \in C^k) - P(\mathbf{x} \in \mathcal{G}^2\big|\mathbf{x} \in C^k)\Big|$. 

When protected attribute $A$ is not binary, how to measure disparity among those subgroups $G_k^1, ..., G_k^T$ becomes more challenging. 
Based on Proposition \ref{theorem_1} above, we define fairness measure of $\mathcal{C}^k$ as the discrepancy between real distribution $\mathcal{H}_k$ that $h_k \sim \mathcal{H}_k$ and uniform distribution $\mathcal{U}\big(\inf{\mathcal{A}}, \sup{\mathcal{A}}\big)$. 

\begin{definition}
	\label{measure_T}
	\textbf{Earth-mover Distance as a Fairness Measure}. Let the empirical histogram for the $k^{th}$ cluster be $h_k \sim \mathcal{H}_k$ and the optimal histogram to ensure fairness be $u_k \sim \mathcal{U}\big(\inf{\mathcal{A}}, \sup{\mathcal{A}}\big)$. Then a fairness measure is simply the distance between them.
	We utilize the Wasserstein distance $W_p$ as $\mathcal{D}_k(\cdot, \cdot)$:
	\begin{equation}
	\mathcal{D}_k(\mathcal{H}_k, \mathcal{U}) = \Big(\inf_{J_k \in \mathcal{J}_k}\int \|h_k - u_k\|^p dJ_k\Big)^{1/p} 
	\end{equation}
	Here $\mathcal{J}_k(\mathcal{H}_k, \mathcal{U})$ is the joint distribution of $\mathcal{H}_k$ and $\mathcal{U}$. 
\end{definition}

\begin{proposition}
	\label{cv-score} \textbf{Equivalence of Our Approach For Two Stated Protected Variables.}
	When protected attribute is binary ($T=2$) and Wasserstein distance with $p=1$ is used, Definition \ref{measure_T} degenerates into Calders-Verwer score.
\end{proposition}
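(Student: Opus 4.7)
The plan is to specialize Definition \ref{measure_T} to $T=2$ and compute the Wasserstein-1 distance explicitly in closed form, then match the resulting expression to the Calders-Verwer score quoted earlier in the paper, namely $\mathrm{CV}_k = \bigl|P(\mathbf{x}\in\mathcal{G}^1\mid\mathbf{x}\in C^k) - P(\mathbf{x}\in\mathcal{G}^2\mid\mathbf{x}\in C^k)\bigr|$. The argument is essentially a one-dimensional optimal transport calculation with two atoms on each side, so I expect it to be short once the notation is pinned down.

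First I would set up the specialization. With $T=2$, the set $\mathcal{A}=\{a_1,a_2\}$ has exactly two elements, so the empirical histogram $h_k$ can be written as $(p,\,1-p)$ with $p=P(\mathbf{x}\in\mathcal{G}^1\mid\mathbf{x}\in C^k)$, and the target uniform $u_k$ is $(\tfrac12,\tfrac12)$. The admissible couplings $J_k\in\mathcal{J}_k(\mathcal{H}_k,\mathcal{U})$ are $2\times 2$ nonnegative matrices with row sums $(p,1-p)$ and column sums $(\tfrac12,\tfrac12)$; parametrizing by the diagonal entry on $(a_1,a_1)$, say $\lambda$, the other three entries are forced by the marginals, leaving $\lambda$ as the only degree of freedom subject to $\max(0,p-\tfrac12)\le\lambda\le\min(p,\tfrac12)$.

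Next I would plug the coupling into the $p=1$ cost and optimize. The off-diagonal cost integrates to $|a_1-a_2|$ times the total off-diagonal mass, i.e.\ $(p-\lambda)+(\tfrac12-\lambda)$; minimizing over $\lambda$ picks $\lambda=\min(p,\tfrac12)$, so the optimal total transported mass is exactly $|p-\tfrac12|$. Hence
\begin{equation}
W_1(\mathcal{H}_k,\mathcal{U}) \;=\; |a_1-a_2|\cdot\bigl|p-\tfrac12\bigr|.
\end{equation}
Using $p+(1-p)=1$ I rewrite $|p-\tfrac12|=\tfrac12|p-(1-p)|=\tfrac12\,\mathrm{CV}_k$, which yields $W_1(\mathcal{H}_k,\mathcal{U})=\tfrac12|a_1-a_2|\,\mathrm{CV}_k$. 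Under the natural encoding of a binary attribute with $|a_1-a_2|=1$ the two scores are proportional with constant factor $\tfrac12$, so in particular they have the same zeros and the same argmin over clusterings and are monotonically equivalent as fairness measures; this is the sense in which the general definition ``degenerates into'' the Calders-Verwer score.

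The only real subtlety, and what I expect to be the main obstacle to a rigorous statement, is the choice of ground metric on the categorical set $\mathcal{A}$: the factor $|a_1-a_2|$ is a modeling convention rather than something canonical, so I would be explicit that the equivalence holds up to this fixed positive scalar and, conventionally, equals $\tfrac12\,\mathrm{CV}_k$ under unit labeling. A minor bookkeeping point is that Definition \ref{measure_T} as written uses $\|h_k-u_k\|^p$ inside the transport integral; I would interpret this in the standard way as $\|a_t-a_{t'}\|^p$ paired with the coupling $J_k$, which is the only reading consistent with $J_k\in\mathcal{J}_k(\mathcal{H}_k,\mathcal{U})$ being a joint distribution of the two histograms' supports.
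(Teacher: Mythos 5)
Your proposal is correct and follows essentially the same route as the paper: specialize to two bins, reduce the $W_1$ distance between $(p,1-p)$ and $(\tfrac12,\tfrac12)$ to a quantity proportional to $|p-\tfrac12|=\tfrac12|h_k^1-h_k^2|$, and identify this with the Calders-Verwer score up to a constant. The only difference is one of rigor: the paper simply asserts $\mathcal{D}_k(\mathcal{H}_k,\mathcal{U})=C_2\sum_{t=1}^2|h_k^t-\tfrac12|$ and absorbs the ground-metric factor into the constant, whereas you actually carry out the one-parameter coupling optimization and make the $|a_1-a_2|$ convention explicit — a worthwhile tightening, but not a different argument.
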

\begin{proof}
	The Calders-Verwer score of cluster $k$ could be written as $\Big|P(\mathbf{x}\in \mathcal{G}^1\big|\mathbf{x}\in C^k) - P(\mathbf{x}\in \mathcal{G}^2\big|\mathbf{x}\in C^k)\Big| = C_1|h_k^1 - h_k^2|$ ($C_1$ is a constant and $h_k^1, h_k^2$ denote values on two bins). $\mathcal{D}_k(\mathcal{H}_k, \mathcal{U}) = C_2\sum_{t=1}^2\big|h_k^t - \frac{1}{2}\big|$. Considering $h_k^1h_k^2\geq 0$ and $h_k^1 + h_k^2 =1$, our fairness measure in Definition. \ref{measure_T} and Calders-Verwer score are equivalent if $p=1$ and $T=2$, concluding the proof. 
\end{proof}

As shown in previous work \cite{DBLP:conf/aistats/ZafarVGG17}, fairness results measured by Calders-Verwer score \cite{DBLP:journals/datamine/CaldersV10} and $p\%$-rule show similar trend. Based on Proposition \ref{cv-score}, we expect that when protected attribute is binary, fairness measured by our protocol should also show similar trend as ``balance'' defined in \cite{chierichetti2017fair}, which is based on $p\%$-rule.


\section{Deep Learning Formulation}

In this section we outline our deep clustering formulation for fairness. We first begin by explaining why finding centroids that are equi-distant from the fairoids is desirable.	

\subsection{Overview and Use of Fairoids}\label{moti}
In our deep clustering formulation, a latent representation $\mathbf{z} = F_\mathcal{W}(\mathbf{x})$ and cluster assignment function $\alpha: \mathbf{z} \rightarrow \{1, ..., K\}$ are simultaneously learned. Thus the $K$ centroids $\mu_1, \mu_2, ..., \mu_K$ of clusters $C^1, C^2, ..., C^K$ are simply $\mu_k = \mathbb{E}_{\alpha(\mathbf{z}_i) = k}[\mathbf{z}] = \frac{1}{|C^k|}\sum_{i=1}^{N} \mathbf{z}_i 1[\alpha(\mathbf{z}_i) = k]$. Each fairoids $\pi_1, \pi_2, ..., \pi_T$ of protected groups $\mathcal{G}^1, \mathcal{G}^2, ..., \mathcal{G}^T$ is represented as $\pi_t = \mathbb{E}_{\chi(i)=a_t}[\mathbf{z}] = \frac{1}{|\mathcal{G}^t|}\sum_{i=1}^{N} \mathbf{z}_i 1[\chi(i) = a_t]$, given the protected attribute annotations $A$. 

Consider the fairness requirement in Definition. \ref{def_T}, which requires that in cluster $C^k$,  $P(\mathbf{x}\in\mathcal{G}^{t_1}\big|\mathbf{x}\in C^k) = P(\mathbf{x}\in\mathcal{G}^{t_2}\big|\mathbf{x}\in C^k), \forall t_1, t_2 = 1, ..., T$. When performing clustering on latent representations, we can re-write it into $P(\chi(i)=t_1\big|\alpha(\mathbf{z}_i)=k)\!=\!P(\chi(i)=t_2\big|\alpha(\mathbf{z}_i)=k)$ where $\chi$ just returns the protected state of an instance. In deep clustering, the requirement that each cluster should have approximately equal number of each protected status individuals can be achieved by making each cluster centroid to be equi-distant to each and every fairoid. Suppose we have a set instances belonging to the $k^{th}$ cluster $X_{k} = \{\mathbf{x}_i\big|\mathbf{x}_i \in C^k\}$ and set of instances belonging to $t^{th}$ protected group  $X_{t} = \{\mathbf{x}_i\big|\mathbf{x}_i \in \mathcal{G}^t\}$. Both $X_t$ and $X_k$ are encoded into latent representations $Z_t$, $Z_k$ via the encoder $F_{\mathcal{W}}: \mathcal{X} \rightarrow \mathcal{Z}$. Larger overlap between instances of $C^k$ and $\mathcal{G}^t$ means the $k^{th}$ cluster is more monochromatic towards protected group $\mathcal{G}^t$. The overlap could be computed by Maximum Mean
Discrepancy (MMD) \cite{DBLP:conf/nips/GrettonBRSS06} between $Z_k \sim U_k$ and $Z_t \sim V_t$:  
{\setlength\abovedisplayskip{5pt}
	\begin{equation}
	\label{mmd}
	\mathrm{MMD}(U_k, V_t) = \Big\|\mathbb{E}_{U_k}\big[F_\mathcal{W}(X_k)\big] - \mathbb{E}_{V_t}\big[F_\mathcal{W}(X_t)\big]\Big\|_{\mathcal{Z}}
	\end{equation}
	\setlength\belowdisplayskip{5pt}
	It can be seen that $Z_k\!\!=\!\! F_\mathcal{W}(X_k)$, $Z_t\!\!=\!\! F_\mathcal{W}(X_t)$ and  $\mathbb{E}_{U_k}\big[Z_k\big]\!=\!\mu_k$,  $\mathbb{E}_{V_t}\big[Z_t\big]\!=\!\pi_t$. Thus the fairness requirement of $k^{th}$ cluster in equation \ref{fair_def_T} can be transformed into: 
	\begin{equation}
	\label{mmd_eq}
	\mathrm{MMD}(U_{k}, V_{t_1}) = \mathrm{MMD}(U_{k}, V_{t_2}), \forall t_1, t_2 \in 1, ..., T
	\end{equation}
	
	\subsection{A Deep Fair Clustering Model} 
	\label{our_model}
	Based on the motivation above, we propose an inductive model (Figure \ref{fig:architecture}) to directly learn the mapping function $\alpha$ that transforms $X$ into soft assignment $Q \in \mathbb{R}^{N \times K}$, which is fairer \textit{w.r.t.} protected attribute $A \in \mathbb{R}^N$ but still discriminative enough. At the beginning, cluster centroids $M = \{\mu_k\}_{k=1}^K$ and fairoids $\Pi = \{\pi_t\}_{t=1}^T$ can be calculated from input data $X$, given protected attributes $A$, as well as the initial encoding function $f_\mathcal{W}$. During the training process, clustering objective and fairness objective are jointly optimized until convergence to refine the network parameter $\mathcal{W}$, cluster centroids $M$, and fairoids  $\Pi$ for improving clustering performance as well as fairness. 
	
	
	As described in motivation part, we need to model probability $P(\alpha(\mathbf{z})=k)$ for clustering objective and conditional probability $P(\chi(\mathbf{z})=t \big| \alpha(\mathbf{z})=k)$ for fairness objective. By using student's t-distribution as kernel, the probabilities above are realized by ``soft assignments'': Following \cite{DBLP:conf/icml/XieGF16}, $q_{ik} = P(\alpha(\mathbf{z}_i)=k)$ is written as: 
	{\setlength\abovedisplayskip{5pt}
		\begin{equation}
		q_{ik} = \frac{(1+\|\mathbf{z}_i - \mu_k\|^2/\alpha)^{-\frac{\alpha+1}{2}}}{\sum_{k'}(1+\|\mathbf{z}_i - \mu_{k'}\|^2/\alpha)^{-\frac{\alpha+1}{2}}}
		\end{equation}
		\setlength\belowdisplayskip{5pt}} 
	In this work, we also model the conditional probability $\phi_{kt} = P\big(\chi(i) = t\big| \alpha(\mathbf{z}_i) = k\big)$ based on MMD in Formula \ref{mmd} and kernelized by student's t-distribution: 
	{\setlength\abovedisplayskip{5pt}
		\begin{equation}
		\phi_{kt} = \frac{(1+\|\mu_{\alpha(\mathbf{z}_i)} - \pi_t\|^2/\alpha)^{-\frac{\alpha+1}{2}}}{\sum_{t'}(1+\|\mu_{\alpha(\mathbf{z}_i)} - \pi_{t'}\|^2/\alpha)^{-\frac{\alpha+1}{2}}}
		\end{equation} 
		\setlength\belowdisplayskip{5pt}} 	
	Thus, the requirement in Definition. \ref{def_T} could be expressed as $\phi_{kt_1} = \phi_{kt_2}, \forall t_1, t_2 \in 1, ..., T$. This could be understood as: for any $t_1, t_2 \in 1, ..., T$, the maximum mean discrepancy between (distributions of) latent representations $Z_k$ of instances in cluster $C^k$ and $Z_{t_1}$ of instances in protected group $\mathcal{G}^{t_1}$ should be equal to that between $Z_k$ and latent representations $Z^{t_2}$ of instances in another protected group $\mathcal{G}^{t_2}$. The basic idea is illustrated in Figure \ref{fig:idea}.   
	
	In our work, both of clustering objective and fairness objective are trained by minimizing the KL divergence between soft assignments $Q, \Phi$ and their corresponding self-training targets $P, \Psi$. The overall objective $\mathcal{L}$ combines $\mathcal{L}_{fr}$ with the clustering loss $\mathcal{L}_{cl}$ defined in \cite{DBLP:conf/icml/XieGF16}: 
	\begin{eqnarray}
	\label{overall_loss}
	& \mathcal{L}_{fr} = \mathrm{KL}(\Psi||\Phi) = \sum_k\sum_t \psi_{kt}\log\frac{\psi_{kt}}{\phi_{kt}}\\
	& \mathcal{L} = \mathcal{L}_{cl} + \gamma \mathcal{L}_{fr} = \mathrm{KL}(P||Q) +\gamma \mathrm{KL}(\Psi||\Phi)
	\end{eqnarray}
	
	The goal of self-training is to impose various constraints on soft assignments $Q, \Phi$ for various expected properties.
	
	\subsection{Self-training: Sharpening or Smoothing?}
	
	The crucial point is how to choose self-training targets $P, \Psi$ for soft assignments $Q, \Phi$, which depends on the expected properties of $Q, \Phi$. Interestingly, the expected properties of fairness target distribution $\Psi$ and clustering target distribution $P$ are at opposite poles.
	
	\subsubsection{One-hot constraint on $Q$: Sharpening}
	The intuition of target $P$ has been demonstrated in \cite{DBLP:conf/icml/XieGF16}, which is imposing \textit{one-hot constraint} on each row of $Q$. The constraint requires assignment function to choose exactly one cluster out of $K$ and put more emphasis on  assignment with higher confidence to improve the cluster purity. This could be done by set targets $P$ as normalized $Q^2$. Then optimizing the clustering objective $\mathcal{L}_{cl}$ is ``sharpening'' the soft assignment $Q$. 
	
	\subsubsection{Fairness constraint on $\Phi$: Smoothing}
	
	On the other hand, recalling the fairness constraint $\phi_{kt_1} = \phi_{kt_2}, \forall t_1, t_2 \in 1, ..., T$, we require the kernelized distances $\phi_k$ between cluster centroids $\mu_k$ and fairoids $\pi_1, \pi_2, ..., \pi_T$ are the same. Thus, the row distributions of $\Phi$ should be ``smoothed''. Ideally, it should be smoothed into uniform distribution. Based on this constraint, we define the self-training target $\Psi$ as: 
	{\setlength\abovedisplayskip{5pt}
		\begin{equation}
		\label{smoothed_target}
		\psi_{kt} = \frac{\hat{\phi}_{kt}/f_t}{\sum_t' \hat{\phi}_{kt'}/f_t'}
		\end{equation}
		\setlength\belowdisplayskip{5pt}} 
	Here $\hat{\phi}_{kt} = \sqrt[\beta]{\phi_{kt}+\epsilon}$, $\beta\geq 2$ and $\epsilon$ is a small number for numerical stability. $f_t = \sum_t \phi_{kt} $ is the frequency over histogram bins $a_1, a_2, ... a_T$. The straightforward visualization of how $P$ and $\Psi$ work could be found in Figure \ref{fig:architecture}. 
	
	\subsection{Mini-batch Training}
	
	We also use stacked denoising autoencoder (SDAE) as base model like previous work \cite{DBLP:conf/icml/XieGF16, DBLP:conf/icml/YangFSH17}. The initial cluster centroids $M\in \mathbb{R}^{N \times d}$ are from $k$-means++ on initial latent representation $Z$. After that, we simultaneously optimize the clustering objective and fairness objective via stochastic gradient descent (SGD). 
	
	To enable minibatch training, we need to compute empirical $\Phi$ based on corresponding centroids $\mu_{\alpha(\mathbf{z}_i)} \in \mathbb{R}^{K \times d}$ in each size-$n$ minibatch. We calculate $M = \{\mu_{\alpha(\mathbf{z}_i)}\}_{i=1}^{n}$ from assignment $Q \in \mathbb{R}^{n \times K}$ and its target $P \in \mathbb{R}^{n \times K}$, as well as the latent representation $Z_j \in \mathbb{R}^{n \times d}$ in the $j^{th}$ minibatch \cite{DBLP:journals/corr/kmeans_NMF}:
	\begin{eqnarray}
	\label{approx}
	& M_j \approx (P_j^TP_j)^{-1}{P}_j Z_j 
	\end{eqnarray} 
	
	\subsection{Implementation Details}
	
	Regarding network architecture, we follows the design of previous deep autoencoder-based methods with the following dimensions $D$-500-500-2000-$d$-2000-500-500-$D$.The learning rates for pre-training and training stages are 0.1 and 0.01, respectively. During layer-wise pretraining, dropout rate 0.2 is used. All experiments are performed with minibatch size 256. On HAR dataset, we pretrain the stacked denoising autoencoder in greedy layer-wise manner for 150 epochs and global manner for 100 epochs (150 and 75 for Adult dataset, 100 and 50 for D\&S dataset). We set the dimension of latent representations $d$ for three datasets as the number of clusters. There are two hyper-parameters in our work $\gamma$ in Formula. \ref{overall_loss} and $\beta$ in Formula. \ref{smoothed_target}. In this work we fix $\beta = 1000$ (although other values of $\beta$ might improve performance). $\gamma$ is the only tunable hyper-parameter in this work. Tuning $\gamma$ can control the trade-off between two objectives. In Section. \ref{tradeoff_disc} we discussed its effects.
	
	Before minibatch training, we store $M$, $Q$ and its target $P$ from $k$-means++ initialization and pre-compute $\Pi$, $\Phi$ and its target $\Psi$ according to Formula. \ref{approx} and given protected attribute annotations $A$. During each training epoch, the self-training targets $P_j$ and $\Psi_j$ in $j^{th}$ minibatch are directly sampled from $P$ and $\Psi$. The overall objective function is optimized with respect to network parameters $\mathcal{W}, \theta$ and cluster centroids $M$. Our implementation is based on PyTorch library \cite{paszke2017automatic}. All  experimental results are averaged performance of \textbf{10 runs} with random restarts. 
	
	\section{Experiments}
	
	\subsection{Experimental Settings}
	
	\subsubsection{Datasets}
	
	We evaluate the performance of our proposed model on three datasets with protected attributes. The statistics of these three datasets are shown in Table. \ref{stats}. 
	\begin{itemize}
		\setlength{\itemsep}{2pt}
		\setlength{\parsep}{2pt}
		\setlength{\parskip}{2pt}
		\item \textbf{Human Action Recognition (HAR)} \cite{DBLP:conf/esann/AnguitaGOPR13}. This dataset contains 10,299 instances in total with captured action features for 30 participants. There are 6 actions in total which serves are groundtruth for clustering. Identity of 30 persons is used as protected attribute, which is multi-state.
		\item \textbf{Adult Income dataset (Adult)}\footnote{https://archive.ics.uci.edu/ml/datasets/adult}. It contains 48,842 instances of information describing adults from the 1994 US Census. The groundtruth for clustering is annual income (greater or less than 50). Gender information (Male/Female) is used as protected attribute.
		\item \textbf{Daily and Sports Activity (D\&S)} \cite{DBLP:journals/pr/AltunBT10}. This dataset has 9,120 sensor records of human daily and sports activities. The categorical information such as ``Moving on Stairs'', ``Playing basketball'', ``Rowing'', etc is used as groundtruth for clustering. 8 names of participants are used as protected attribute.
	\end{itemize} 
	We formulate our train/test data sets by randomly sampling according to the splits in Table. \ref{stats}. We evaluate all of the approaches on the test sets which are unseen during training. For spectral clustering approach SEC \cite{DBLP:conf/ijcai/NieXTZ09}, computing Laplacian matrices for the whole training sets are infeasible. We downsample those dataset into 2000 data points. Out-of-sample extension is done by allocating each test data point to its nearest centroid.
	
	\begin{table}[h]
		\begin{center}
			\setlength{\tabcolsep}{1.2mm}{
				\begin{tabular}{cccc}
					\toprule[2pt]
					Datasets & Train/Test & Protected Attribute &$\#\mathrm{cluster}$\\
					\hline HAR &   9,099/1,200 & Person Identity (30) & 6\\
					\hline Adult &   30,165/15,060 & Gender (2) & 2\\
					\hline D\&S &   8,320/800 & Names (8) & 10\\
					\bottomrule[2pt]
				\end{tabular}
			}
		\end{center}
		\caption{Dataset Statistics. In the third column, $(\cdot)$ denotes number of unique states in protected attributes.}
		\label{stats}
	\end{table}
	
	\subsubsection{Baselines}
	
	We compare our method, in terms of both clustering quality and fairness, with our base model DEC \cite{DBLP:conf/icml/XieGF16} and other representative deep or shallow clustering approaches. Shallow clustering baselines including centroid-based $k$-means++ \cite{DBLP:conf/focs/OstrovskyRSS06}, density-based DBSCAN \cite{DBLP:conf/kdd/EsterKSX96}, as well as spectral clustering based SEC \cite{DBLP:conf/ijcai/NieXTZ09}. Apart from DEC, We also compare with other strong baselines that
	use deep autoencoders including Autoencoder + $k$-means (AE+KM) and Deep Cluster Network (DCN) \cite{DBLP:conf/icml/YangFSH17}. We do not compare with previous work \cite{chierichetti2017fair} because: (1) it does not work for datasets with multi-state protected attribute; (2) the complexity of their algorithm is $\mathcal{O}(N^3)$ which is not tractable to our datasets; (3) and their codes are not released. 
	
	For DBSCAN and $k$-means++, we use their implementations of scikit-learn library \cite{scikit-learn}. For DCN, SEC and DEC, we utilize the authors' open-sourced codes. All hyper-parameters in those algorithms are carefully tuned according to instructions in those papers.
	
	\subsubsection{Measures}
	To evaluate the clustering quality of our approach and compared methods mentioned above, we use the widely used clustering accuracy (ACC) and normalized mutual information (NMI) \cite{DBLP:journals/jmlr/StrehlG02} metrics. Higher values for both of these metrics indicate better performance. 
	
	To evaluate fairness of clustering, we use the protocol defined in Definition. \ref{measure_T}. We use Wasserstein distance as the distance metric $\mathcal{D}$ named ``FWD'', where smaller value denotes better performance. For Adult dataset whose protected attribute is binary, we also evaluate the methods in terms of balance score as described in Equation \ref{p-rule} from previous work \cite{chierichetti2017fair}. 
	
	\subsection{Results and analysis}
	
	In this section, we present our experimental results that answer the following research questions: 
	
	\begin{itemize}
		\setlength{\itemsep}{2pt}
		\setlength{\parsep}{2pt}
		\setlength{\parskip}{2pt}
		\item Q1: How fair are existing representative clustering approaches (both deep and non-deep)?
		\item Q2: Can our proposed model improve accuracy and fairness over its base model DEC and other representative approaches? When the number of clusters $K$ is increasing, can our approach steadily learn fairer clusters than baseline methods?
		\item Q3: When the protected attribute is binary, Does our fairness measure show similar trends to the balance metric defined in \cite{chierichetti2017fair}?
		\item Q4: When changing hyper-parameter $\gamma$ of our model, what is the trade-off between clustering accuracy and fairness?
	\end{itemize}
	\begin{figure}[h]
		\centering
		\includegraphics[width=0.88\linewidth]{./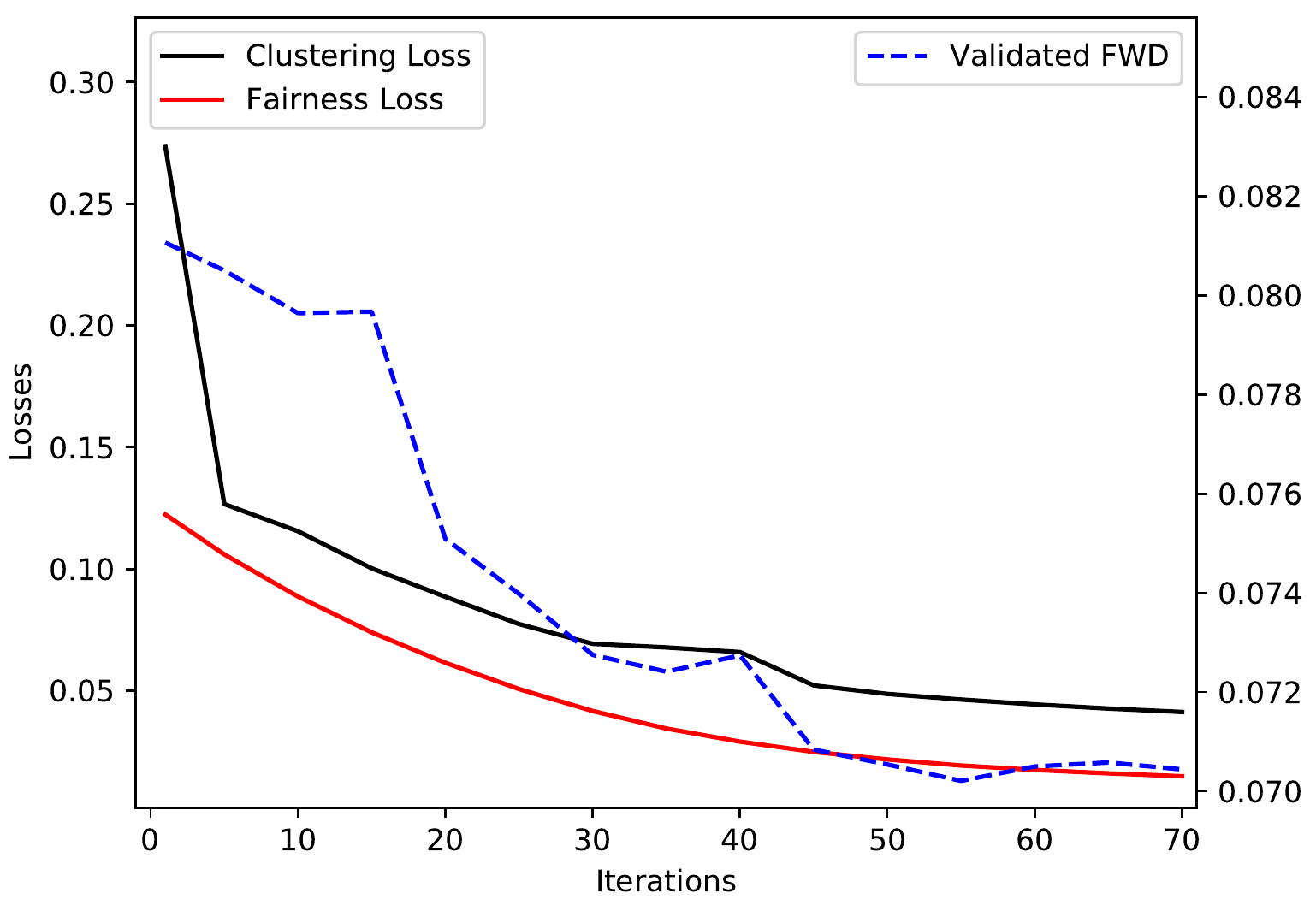}
		\caption{Visualized learning curves of clustering loss, fairness loss and FWD on validation set on HAR dataset.}
		\label{fig:curve}
		\setlength{\belowcaptionskip}{-0.1cm}
	\end{figure}
	
	{\setlength\abovedisplayskip{5pt}
		\begin{table*}[h]
			\begin{center}
				\setlength{\tabcolsep}{2mm}{
					\begin{tabular}{c|ccc|ccc|ccc}
						\toprule[2pt]
						& \multicolumn{3}{c|}{HAR} &  \multicolumn{3}{c|}{Adult} & \multicolumn{3}{c}{D\&S}\\
						\hline
						Methods & ACC $\uparrow$ & NMI  $\uparrow$ & FWD $\downarrow$ & ACC $\uparrow$& FWD $\downarrow$ & Balance $\uparrow$& ACC $\uparrow$ & NMI $\uparrow$& FWD $\downarrow$\\ \hline
						k-means++ &{$0.597$}  &$0.592 $& $0.109$&$0.714$&$0.417$& $0.312$& $0.647$&$0.732$&{$0.508$}\\ 
						DBSCAN & $0.501$ & $0.401$ &$ 0.773$ &$0.753$&$1.000$&$0.000$&$0.556$&$0.535$&$1.232$\\ 
						SEC & $0.481$ & $0.307$ & $0.093$ &$0.743$&$0.940$&$0.027$&$0.524$&$0.507$&$0.546$\\ 
						\hline AE + KM &$0.558 $ &$0.613 $&$0.092 $&$\mathbf{0.758}$&$0.438$&$0.299$&$0.766$&$0.782$&$0.537$\\ 
						DCN &$0.577 $ &$0.612 $& $0.102 $&$\mathbf{0.758}$&$0.441$&$0.295$&$0.678$&$0.751$&$0.681$\\ 
						DEC &$0.571 $&$\mathbf{0.662} $&$0.097$&{$0.744$}&{$0.396$}&{$0.363$}&$\mathbf{0.796}$&$\mathbf{0.825}$&$0.554$\\
						\hline Ours &$\mathbf{0.607} $&{$0.661 $}&$\mathbf{0.089}$&$0.735$&$\mathbf{0.368}$& $\mathbf{0.430}$&{$0.788$}&{$0.818$}&$\mathbf{0.462}$\\	
						\bottomrule[2pt]
					\end{tabular}
				}
			\end{center}
			\caption{Experimental results (over 10 trials) on HAR, Adult dataset, and D\&S data sets. The ``Balance'' metric is defined in \cite{chierichetti2017fair}. Best performance is in \textbf{bold black}. ``$\uparrow$'' and ``$\downarrow$'' mean that larger value indicates better performance or vice versa.}
			\label{table:quant}
		\end{table*}
		\setlength\belowdisplayskip{5pt}}
	\begin{figure*}[h]
		\centering
		\subfloat[Initial histogram]{\includegraphics[width=0.33\linewidth]{./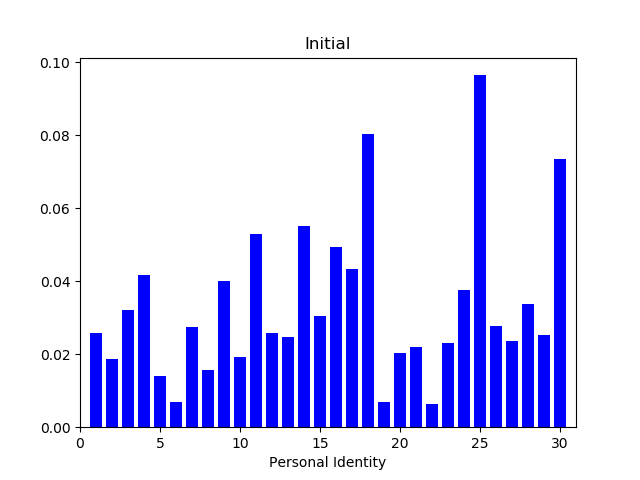}%
			\label{fig:init}}
		\subfloat[Histogram after 3 Iterations]{\includegraphics[width=0.33\linewidth]{./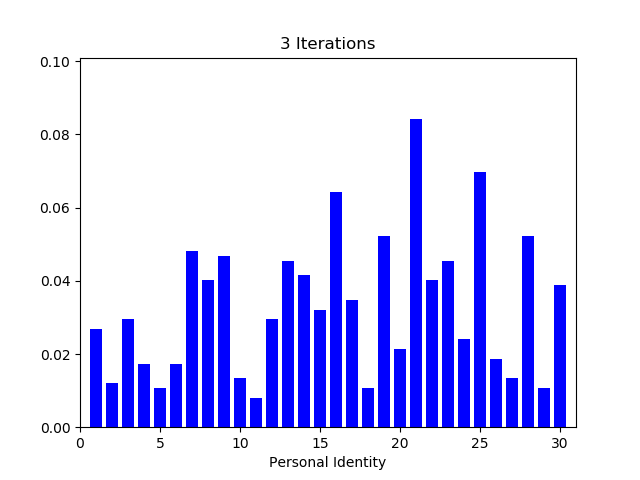}%
			\label{fig:3_iter}}
		\subfloat[Histogram after 5 Iterations]{\includegraphics[width=0.33\linewidth]{./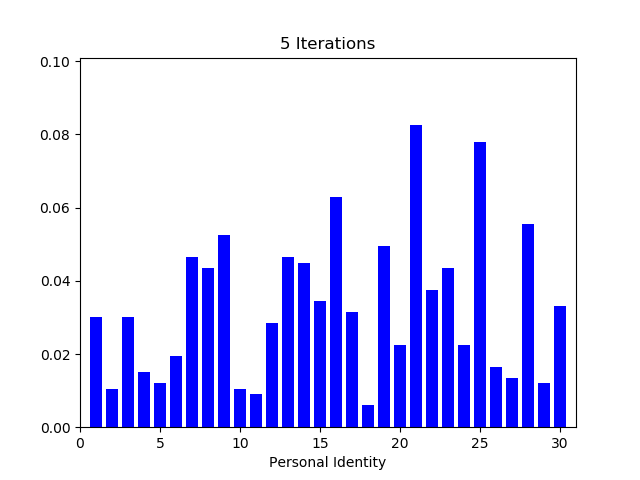}%
			\label{fig:5_iter}}
		\caption{Histograms of the unfairest cluster (with largest FWD) across the training stages on HAR dataset.}
		\label{fig: fair_iter}
	\end{figure*}
	\subsubsection{Answ1: Fairness of existing approaches.} 
	
	We evaluate the fairness of 3 deep clustering approaches and 3 non-deep clustering approaches in terms of FWD on HAR, Adult and D\&S datasets (Table \ref{table:quant}). It can be seen that there is no necessary relation between clustering quality and fairness. Besides, deep clustering with dimensionality reduction is not necessarily fairer than shallow methods which perform clustering on original features, although deep clustering methods tend to achieve higher clustering accuracy. Among shallow clustering approaches, density-based method DBSCAN seems to be the unfairest one because the number of clusters in DBSCAN can not be assigned. If the algorithm automatically chooses large number of clusters, the imparity within single cluster could be aggravated.
	
	\subsubsection{Answ2: Performance of our model} 
	
	Performance of our model on HAR, Adult and D\&S datasets, in terms of both traditional clustering accuracy, normalized mutual information and fairness, are shown in the last row in Table. \ref{table:quant}. As for clustering quality measured by ACC and NMI, performance of our model is mildly lower or ties with the base model DEC. On the other hand, the fairness measured by FWD is remarkably improved ($10.3\%$ relative improvement on HAR, $7\%$ on Adult and $16.6\%$ on D\&S). 
	\begin{figure*}[t]
		\centering
		\subfloat[DEC]{\includegraphics[width=0.38\linewidth]{./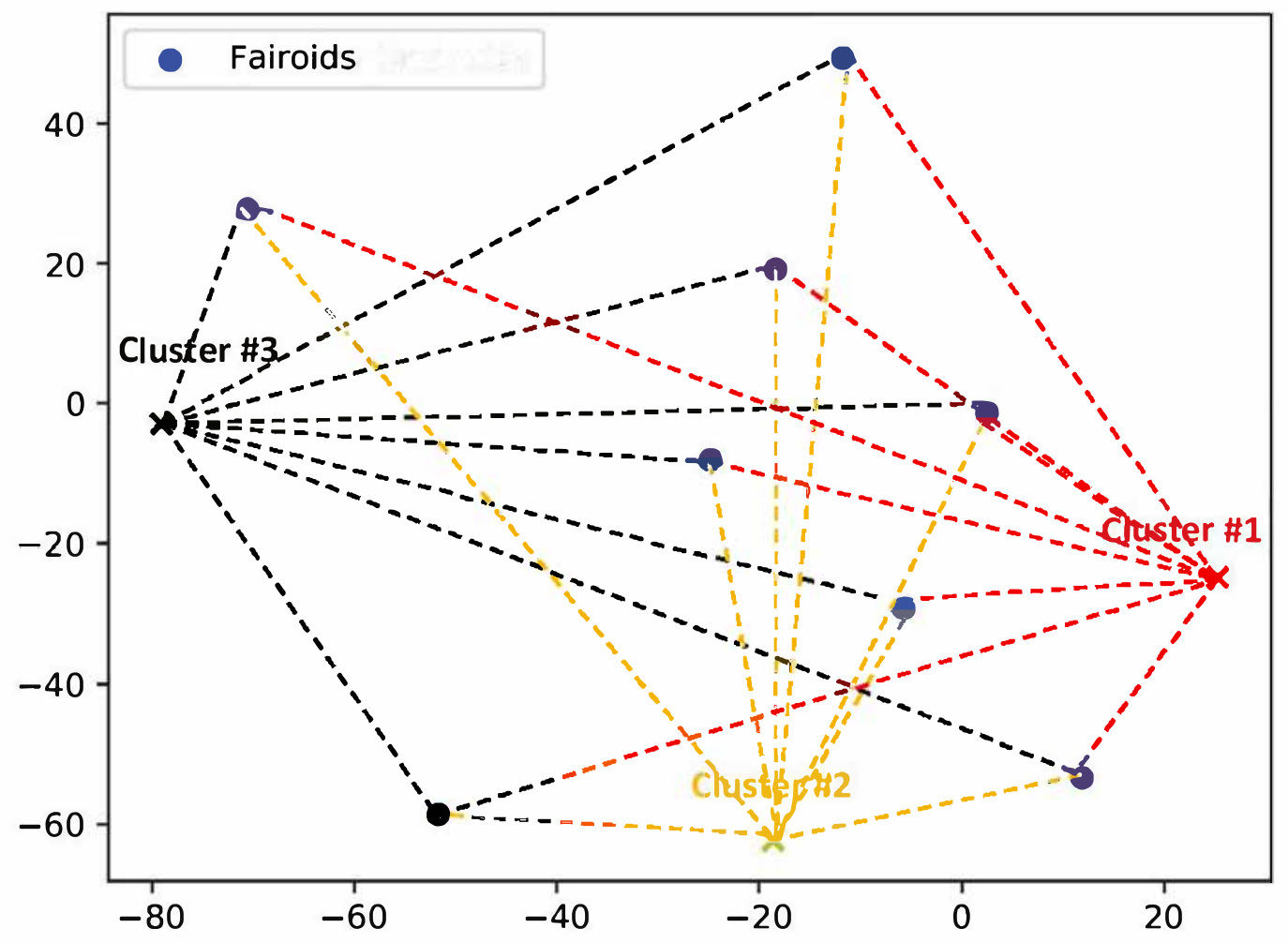}%
			\label{fig: tsne_init}}
		\subfloat[Our approach]{\includegraphics[width=0.38\linewidth]{./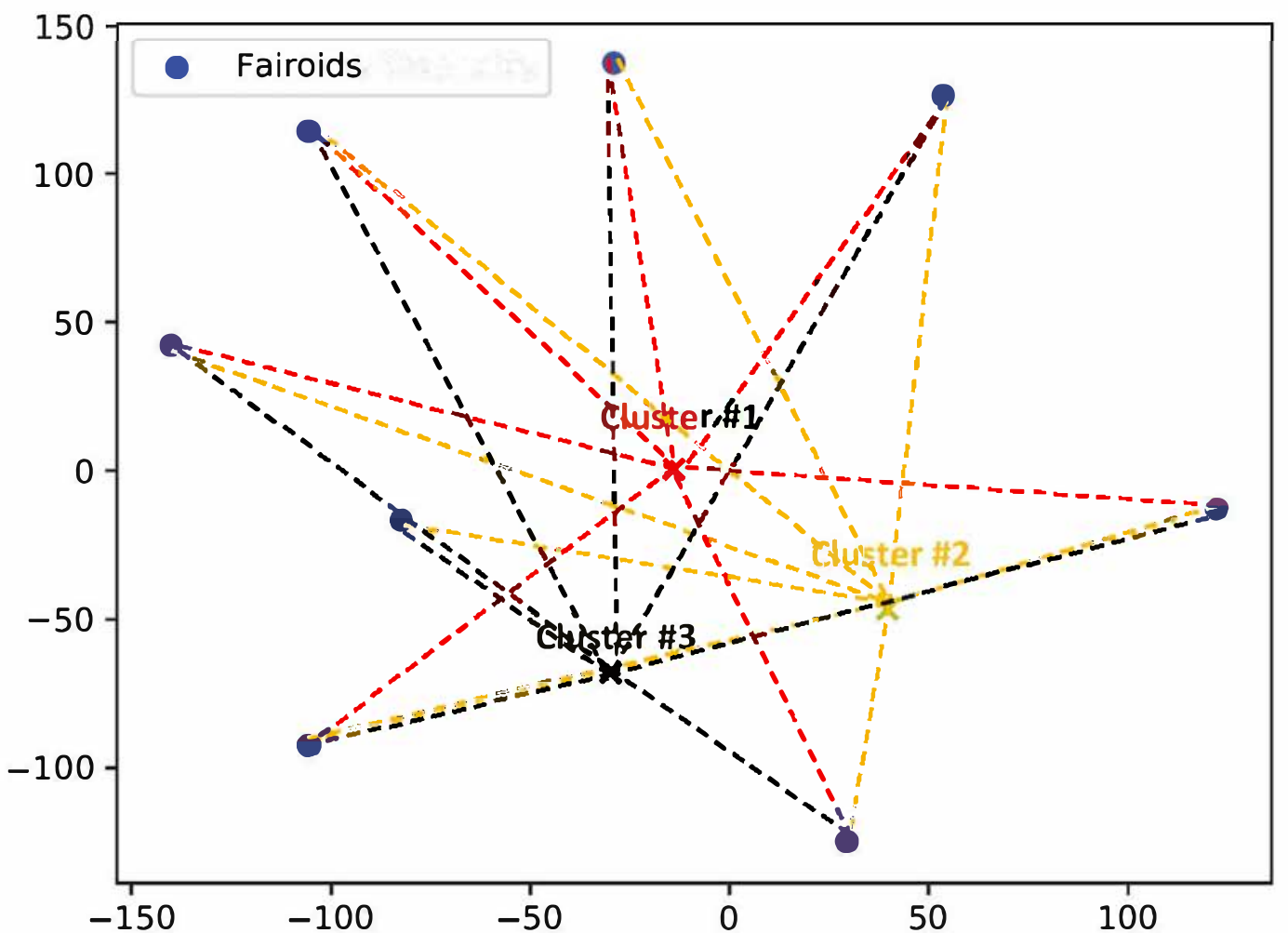}%
			\label{fig: tsne_ours}}
		\caption{t-SNE embeddings of resulting centroids and fairoids from deep embedded clustering (DEC) and our approach.}
		\label{fig: tsne}
	\end{figure*}
	\begin{figure}[h]
		\centering
		\includegraphics[width=0.8\linewidth]{./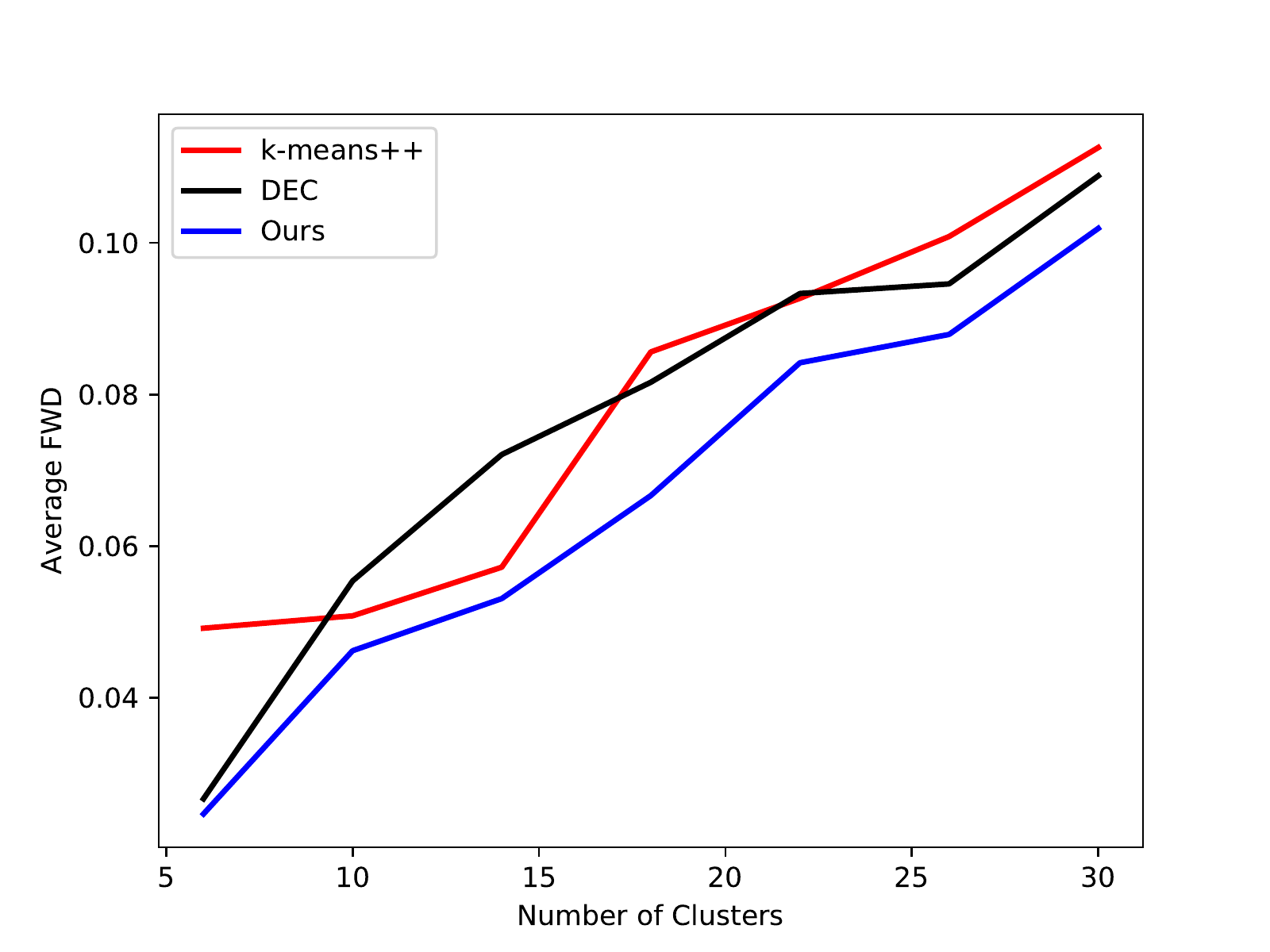}
		\caption{FWD per cluster with increasing number of clusters.}
		\label{fig: average_fwd}
	\end{figure}
	We visualize the histograms of the unfairest cluster at the initial stage and after 3 and 5 iterations (see Figure \ref{fig: fair_iter}). It is shown that the balance of this unfairest cluster is largely improved. Besides, we also present the learning curves of clustering and fairness losses and the measured fairness on HAR dataset (shown in Figure \ref{fig:curve}), which shows the fairness loss has no problem to be simultaneously optimized with clustering objective and the fairness is effectively improved. To further verify how our fairness objective works, we apply DEC and our approach on downsampled data from three clusters in D\&S dataset. The t-SNE embeddings of resulting cluster and fairoids from DEC \cite{DBLP:conf/icml/XieGF16} and our approach are visualized in Figure \ref{fig: tsne}, which confirms that distances from the cluster centroids to embedding of protected groups are effectively balanced but the cluster centroids seem to be less separated. This may explain why our fairness objective can improve fairness of resulting clusters but lead to loss in accuracy as shown in Table \ref{table:quant}. 
	
	Moreover, we testify if our approach can steadily learn fairer clusters when number of clusters $K$ is increasing. Intuitively, monochromatic cluster is more likely to appear if there are more clusters. As shown in Figure \ref{fig: average_fwd}, it is not surprising that average FWD per cluster is increasing with more clusters in each of $k$-means, DEC and our approach. However, our approach consistently learns fairer clusters.
	
	\subsubsection{Answ3: Fairness measured by balance when $T=2$.} 
	
	To verify that if the fairness measured based on our definition has similar trend to definition and measure in \cite{chierichetti2017fair} when $T = 2$, we also utilize the balance score to evaluate the fairness of our clustering result on Adult data set. As seen in the last column in Table. \ref{table:quant}, our model dramatically improves the balance of the unfairest cluster by $18\%$ compared to DEC. Moreover, the fairness measured by balance score is verified to has the same trend as FWD for all clustering approaches, which has been theoretically pointed out in Proposition \ref{cv-score}. 
	\subsubsection{Answ4: Trade-off between Clustering Accuracy and Fairness} \label{tradeoff_disc}
	Because the fairness loss could be seen as a ``fairness'' regularization term for clustering loss, we are interested in how the hyper-parameter $\gamma$ controls the trade-off between clustering accuracy and fairness. Thus we testify our model under different values of $\gamma$ ranging from $10^{-2}$ to $10^3$ on Adult dataset, the results can be seen in Figure \ref{fig:tradeoff}. With larger $\gamma$ our model is fairer but less accurate. 
	{
		\begin{figure}[h]
			\centering
			\includegraphics[width=0.75\linewidth]{./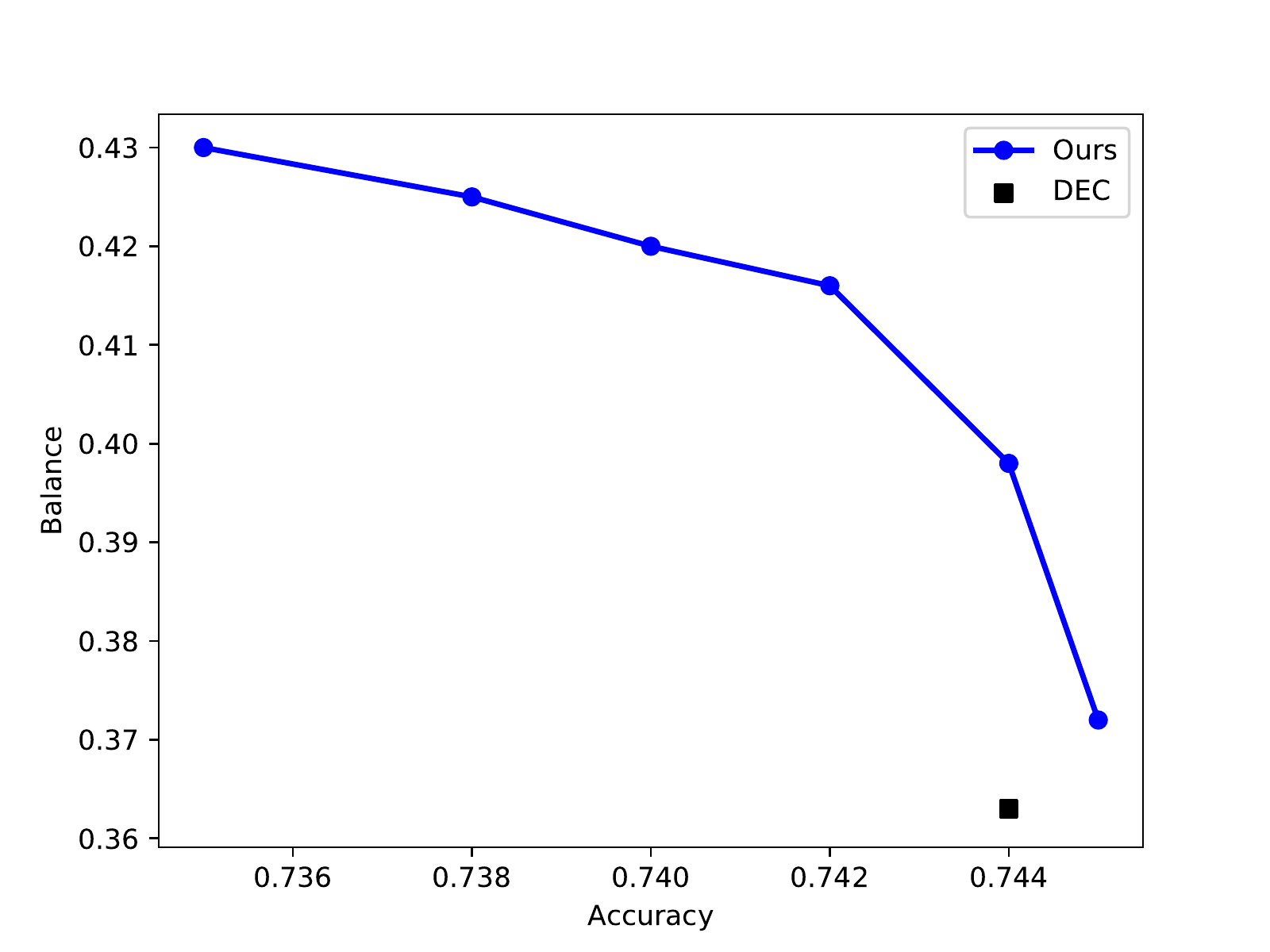}
			\caption{Trade-off between the clustering accuracy and balance of clusters on Adult dataset when hyper-parameter $\gamma$ varies. The black point shows clustering accuracy and FWD of base model DEC and the blue line shows the performance of ours.}
			\label{fig:tradeoff}
		\end{figure}
		\setlength\belowdisplayskip{1pt}}
	\section{Conclusion}
	In this work, we define the fair clustering problem with multi-state protected variable under group fairness and disparate impact doctrines. Based on the definition, we propose a scalable framework to improve fairness of deep clustering. We compare our approach with six representative clustering approaches on three publicly available data sets with different types of protected attributes. The experimental results demonstrate our approach can steadily improve fairness. By controlling one hyper-parameter, our approach can provide flexible trade-off between clustering accuracy and fairness. 

	
	\newpage
	\newpage
	\bibliography{example_paper}
	\bibliographystyle{icml2019}

	
\end{document}